\documentclass{article}

%


\usepackage[square,sort,comma,numbers]{natbib}
\usepackage[final]{nips_2017}

\usepackage[utf8]{inputenc} 
\usepackage[T1]{fontenc}    
\usepackage{hyperref}       
\usepackage{url}            
\usepackage{booktabs}       
\usepackage{amsfonts}       
\usepackage{amsmath}        
\usepackage{amsthm}         
\usepackage{nicefrac}       
\usepackage{microtype}      

\usepackage{hyperref}
\usepackage{multirow}
\usepackage{times}
\usepackage{epsfig}
\usepackage{graphicx}
\usepackage{amsmath}
\usepackage{amssymb}
\usepackage{enumerate}
\usepackage{amsfonts,lscape,color,url}
\usepackage{adjustbox}
\usepackage{graphicx,graphics,wrapfig,epstopdf,subfig}
\usepackage{algorithm}
\usepackage{algpseudocode}  
\usepackage{algorithmicx}   
\usepackage{threeparttable}

\newtheorem{prop}{Proposition}

\newcommand{\beq}{\begin{equation}}
\newcommand{\eeq}{\end{equation}}
\newcommand{\beqs}{\begin{eqnarray}}
\newcommand{\eeqs}{\end{eqnarray}}
\newcommand{\barr}{\begin{array}}
	\newcommand{\earr}{\end{array}}

\newcommand{\eg}{\textit{e.g.}}
\newcommand{\ie}{\textit{i.e.}}

\newcommand{\xv}{{\boldsymbol x}}
\newcommand{\yv}{{\boldsymbol y}}

\newcommand{\zv}{{\boldsymbol z}}

\newcommand{\Sigmamat}{{\boldsymbol \Sigma}}

\newcommand{\thetav}{{\boldsymbol \theta}}

\newcommand{\muv}{{\boldsymbol \mu}}

\newcommand{\R}{\mathbb{R}}
\newcommand{\E}{\mathbb{E}}

\newcommand{\Lcal}{\mathcal{L}}
\newcommand{\Ncal}{\mathcal{N}}

\newcommand{\Rcal}{\mathcal{R}}

\title{Triangle Generative Adversarial Networks}

%

\author{Zhe Gan$^*$, Liqun Chen\thanks{~Equal contribution.}~~, Weiyao Wang, Yunchen Pu, Yizhe Zhang, \\ \textbf{Hao Liu, Chunyuan Li, Lawrence Carin} \\
	Duke University \\
	{\tt zhe.gan@duke.edu}}

\begin{document}

\maketitle

\begin{abstract}
A Triangle Generative Adversarial Network ($\Delta$-GAN) is developed for semi-supervised cross-domain joint distribution matching, where the training data consists of samples from each domain, and supervision of domain correspondence is provided by only a few {\em paired} samples. $\Delta$-GAN consists of four neural networks, two generators and two discriminators. The generators are designed to learn the two-way conditional distributions between the two domains, while the discriminators implicitly define a ternary discriminative function, which is trained to distinguish real data pairs and two kinds of fake data pairs. The generators and discriminators are trained together using adversarial learning. Under mild assumptions, in theory the joint distributions characterized by the two generators concentrate to the data distribution.  In experiments, three different kinds of domain pairs are considered, image-label, image-image and image-attribute pairs. Experiments on semi-supervised image classification, image-to-image translation and attribute-based image generation demonstrate the superiority of the proposed approach.
\end{abstract}

\vspace{-4.5mm}
\section{Introduction}
\vspace{-2mm}
Generative adversarial networks (GANs)~\cite{goodfellow2014generative} have emerged as a powerful framework for learning generative models of arbitrarily complex data distributions. When trained on datasets of natural images, significant progress has been made on generating realistic and sharp-looking images~\cite{denton2015deep,radford2015unsupervised}. 
The original GAN formulation was designed to learn the data distribution in one domain. In practice, one may also be interested in matching two joint distributions. This is an important task, since mapping data samples from one domain to another has a wide range of applications.  For instance, matching the joint distribution of image-text pairs allows simultaneous image captioning and text-conditional image generation~\cite{reed2016generative}, while image-to-image translation~\cite{isola2016image} is another challenging problem that requires matching the joint distribution of image-image pairs. 

In this work, we are interested in designing a GAN framework to match joint distributions. If paired data are available, a simple approach to achieve this is to train a conditional GAN model~\cite{reed2016generative,mirza2014conditional}, from which a joint distribution is readily manifested and can be matched to the empirical joint distribution provided by the paired data.   
However, fully supervised data are often difficult to acquire. Several methods have been proposed to achieve unsupervised joint distribution matching without any paired data, including DiscoGAN~\cite{kim2017learning}, CycleGAN~\cite{zhu2017unpaired} and DualGAN~\cite{yi2017dualgan}. Adversarially Learned Inference (ALI)~\cite{dumoulin2016adversarially} and Bidirectional GAN (BiGAN)~\cite{donahue2016adversarial} can be readily adapted to this case as well.
Though empirically achieving great success, in principle, there exist infinitely many possible mapping functions that satisfy the requirement to map a sample from one domain to another. In order to alleviate this  nonidentifiability issue, paired data are needed to provide proper supervision to inform the model the kind of joint distributions that are desired. 
\begin{figure}[t!]
	\centering
	\includegraphics[width=0.90\textwidth]{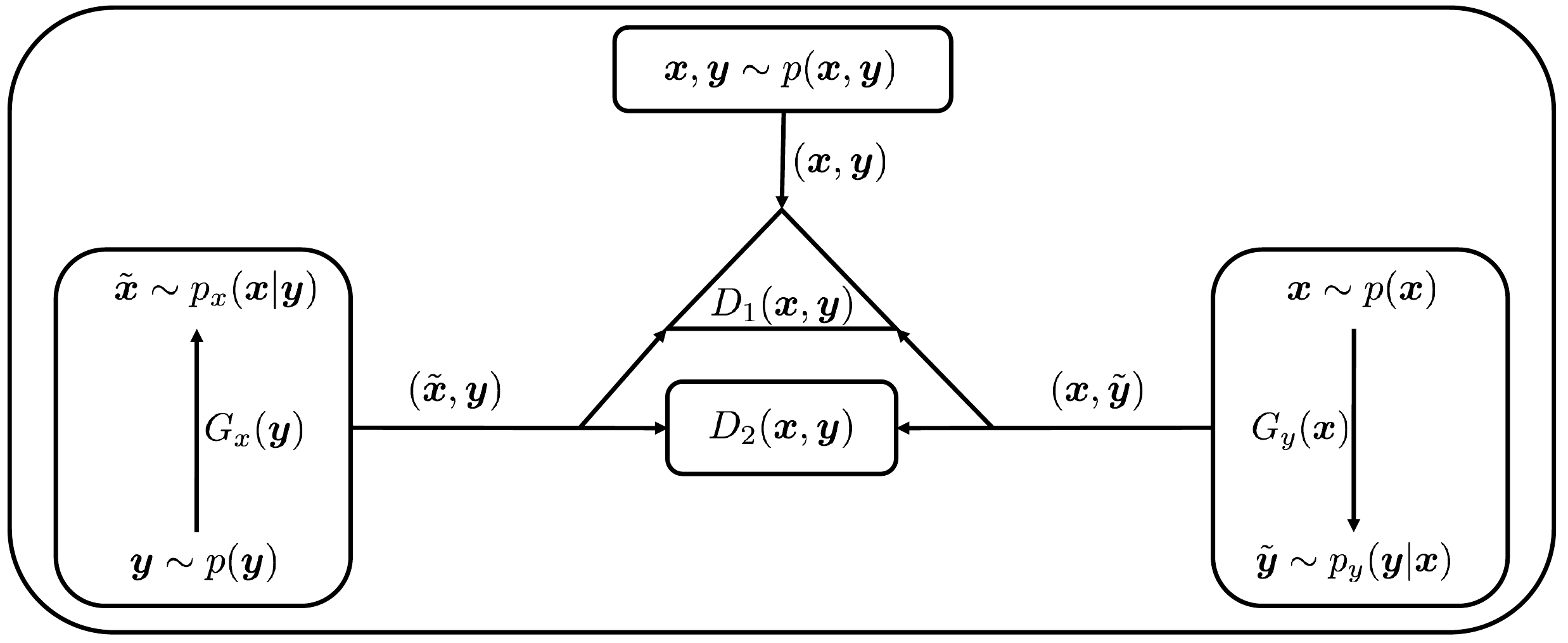}
	\caption{\small{Illustration of the Triangle Generative Adversarial Network ($\Delta$-GAN). }}
	\label{fig:framework}
	\vspace{-3.0mm}
\end{figure}

This motivates the proposed Triangle Generative Adversarial Network ($\Delta$-GAN), a GAN framework that allows semi-supervised joint distribution matching, where the supervision of domain correspondence is provided by a few paired samples. 
$\Delta$-GAN consists of two generators and two discriminators. The generators are designed to learn the bidirectional mappings between domains, while the discriminators are trained to distinguish real data pairs and two kinds of fake data pairs. Both the generators and discriminators are trained together via adversarial learning.

$\Delta$-GAN bears close resemblance to Triple GAN~\cite{li2017triple}, a recently proposed method that can also be utilized for semi-supervised joint distribution mapping. However, there exist several key differences that make our work unique. First, $\Delta$-GAN uses two discriminators in total, which implicitly defines a ternary discriminative function, instead of a binary discriminator as used in Triple GAN. Second, $\Delta$-GAN can be considered as a combination of conditional GAN and ALI, while Triple GAN consists of two conditional GANs. Third, the distributions characterized by the two generators in both $\Delta$-GAN and Triple GAN concentrate to the data distribution in theory. However, when the discriminator is optimal, the objective of $\Delta$-GAN becomes the Jensen-Shannon divergence (JSD) among three distributions, which is symmetric; the objective of Triple GAN consists of a JSD term plus a Kullback-Leibler (KL) divergence term. The asymmetry of the KL term makes Triple GAN more prone to generating fake-looking samples~\cite{arjovsky2017towards}. Lastly, the calculation of the additional KL term in Triple GAN is equivalent to calculating a supervised loss, which requires the explicit density form of the conditional distributions, which may not be desirable. 
On the other hand, $\Delta$-GAN is a fully adversarial approach that does not require that the conditional densities can be computed; $\Delta$-GAN only require that the conditional densities can be sampled from in a way that allows gradient backpropagation.

$\Delta$-GAN is a general framework, and can be used to match any joint distributions. In experiments, in order to demonstrate the versatility of the proposed model, we consider three domain pairs: image-label, image-image and image-attribute pairs, and use them for semi-supervised classification, image-to-image translation and attribute-based image editing, respectively. In order to demonstrate the scalability of the model to large and complex datasets, we also present attribute-conditional image generation on the COCO dataset~\cite{lin2014microsoft}.

\section{Model}
\subsection{Generative Adversarial Networks (GANs)}
Generative Adversarial Networks (GANs)~\cite{goodfellow2014generative} consist of a generator $G$ and a discriminator $D$ that compete in a two-player minimax game, where the generator is learned to map samples from an arbitray latent distribution to data, while the discriminator tries to distinguish between real and generated samples. The goal of the generator is to ``fool'' the discriminator by producing samples that are as close to real data as possible. Specifically, $D$ and $G$ are learned as
\begin{align}
	\min_G \max_D V(D,G) = \E_{\xv\sim p(\xv)} [\log D(\xv)] + \E_{\zv\sim p_z(\zv)} [\log (1-D(G(\zv)))] \,,
\end{align}
where $p(\xv)$ is the true data distribution, and $p_z(\zv)$ is usually defined to be a simple distribution, such as the standard normal distribution. The generator $G$ implicitly defines a probability distribution $p_g(\xv)$ as the distribution of the samples $G(\zv)$ obtained when $\zv\sim p_z(\zv)$. For any fixed generator $G$, the optimal discriminator is $D(\xv) = \frac{p(\xv)}{p_g(\xv)+p(\xv)}$. When the discriminator is optimal, solving this adversarial game is equivalent to minimizing the Jenson-Shannon Divergence (JSD) between $p(\xv)$ and $p_g(\xv)$~\cite{goodfellow2014generative}. The global equilibrium is achieved if and only if $p(\xv)=p_g(\xv)$. 

\subsection{Triangle Generative Adversarial Networks ($\Delta$-GANs)}
We now extend GAN to $\Delta$-GAN for joint distribution matching. We first consider $\Delta$-GAN in the supervised setting, and then discuss semi-supervised learning in Section~\ref{sec:triGAN_semi}.
Consider two related domains, with $\xv$ and $\yv$ being the data samples for each domain. 
We have fully-paired data samples that are characterized by the joint distribution $p(\xv,\yv)$, which also implies that samples from both the marginal $p(\xv)$ and $p(\yv)$ can be easily obtained. 

$\Delta$-GAN consists of two generators: (\emph{i}) a generator $G_x(\yv)$ that defines the conditional distribution $p_x(\xv|\yv)$, and (\emph{ii}) a generator $G_y(\xv)$ that characterizes the conditional distribution in the other direction $p_y(\yv|\xv)$. $G_x(\yv)$ and $G_y(\xv)$ may also implicitly contain a random latent variable $\zv$ as input, \ie, $G_x(\yv,\zv)$ and $G_y(\xv,\zv)$.
%
In the $\Delta$-GAN game, after a sample $\xv$ is drawn from $p(\xv)$, the generator $G_y$ produces a pseudo sample $\tilde{\yv}$ following the conditional distribution $p_y(\yv|\xv)$. Hence, the fake data pair $(\xv,\tilde{\yv})$ is a sample from the joint distribution $p_y(\xv,\yv) = p_y(\yv|\xv)p(\xv)$. Similarly, a fake data pair $(\tilde{\xv},\yv)$ can be sampled from the generator $G_x$ by first drawing $\yv$ from $p(\yv)$ and then drawing $\tilde{\xv}$ from $p_x(\xv|\yv)$; hence $(\tilde{\xv},\yv)$ is sampled from the joint distribution $p_x(\xv,\yv) = p_x(\xv|\yv) p(\yv)$. As such, the generative process between $p_x(\xv, \yv)$ and $p_y(\xv,\yv)$ is reversed.

The objective of $\Delta$-GAN is to match the three joint distributions: $p(\xv,\yv)$, $p_x(\xv,\yv)$ and $p_y(\xv,\yv)$. If this is achieved, we are ensured that we have learned a bidirectional mapping $p_x(\xv|\yv)$ and $p_y(\yv|\xv)$ that guarantees the generated fake data pairs $(\tilde{\xv},\yv)$ and $(\xv,\tilde{\yv})$ are indistinguishable from the true data pairs $(\xv,\yv)$. 
In order to match the joint distributions, an adversarial game is played. Joint pairs are drawn from three distributions: $p(\xv,\yv)$, $p_x(\xv,\yv)$ or $p_y(\xv,\yv)$, and two discriminator networks are learned to discriminate among the three, while the two conditional generator networks are trained to fool the discriminators. 

The value function describing the game is given by
\begin{align}
\min_{G_x,G_y} \max_{D_1,D_2} &V(G_x,G_y,D_1,D_2) = \E_{(\xv,\yv)\sim p(\xv,\yv)} [\log D_1(\xv,\yv)] \label{eqn:triGAN} \\
&+ \E_{\yv\sim p(\yv),\tilde{\xv}\sim p_x(\xv|\yv)}\Big[\log\Big( (1-D_1(\tilde{\xv},\yv))\cdot D_2 (\tilde{\xv},\yv) \Big) \Big] \nonumber \\
&+ \E_{\xv\sim p(\xv),\tilde{\yv}\sim p_y(\yv|\xv)}\Big[\log\Big( (1-D_1(\xv,\tilde{\yv}))\cdot (1-D_2 (\xv,\tilde{\yv})) \Big) \Big] \nonumber \,.
\end{align}
The discriminator $D_1$ is used to distinguish whether a sample pair is from $p(\xv,\yv)$ or not, if this sample pair is not from $p(\xv,\yv)$, another discriminator $D_2$ is used to distinguish whether this sample pair is from $p_x(\xv,\yv)$ or $p_y(\xv,\yv)$. $D_1$ and $D_2$ work cooperatively, and the use of both implicitly defines a ternary discriminative function $D$ that distinguish sample pairs in three ways. 
See Figure~\ref{fig:framework} for an illustration of the adversarial game and Appendix B for an algorithmic description of the training procedure.

\subsection{Theoretical analysis}\label{sec:theoretical_analysis}
$\Delta$-GAN shares many of the theoretical properties of GANs~\cite{goodfellow2014generative}.
We first consider the optimal discriminators $D_1$ and $D_2$ for any given generator $G_x$ and $G_y$. These optimal discriminators then allow reformulation of objective~(\ref{eqn:triGAN}), which reduces to the Jensen-Shannon divergence among the joint distribution $p(\xv,\yv), p_x(\xv,\yv)$ and $p_y(\xv,\yv)$.
\begin{prop}
	For any fixed generator $G_x$ and $G_y$, the optimal discriminator $D_1$ and $D_2$ of the game defined by $V(G_x,G_y,D_1,D_2)$ is
	\begin{align}
	D_1^*(\xv,\yv) = \frac{p(\xv,\yv)}{p(\xv,\yv)+p_x(\xv,\yv)+p_y(\xv,\yv)}, \,\, D_2^*(\xv,\yv) = \frac{p_x(\xv,\yv)}{p_x(\xv,\yv)+p_y(\xv,\yv)} \nonumber \,.
	\end{align}
\end{prop}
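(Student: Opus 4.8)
The plan is to follow the standard pointwise-maximization argument from the original GAN analysis, adapted to the ternary setting. First I would rewrite all three expectations in~(\ref{eqn:triGAN}) as integrals over the common domain of $(\xv,\yv)$. The first term is already an integral against $p(\xv,\yv)$. For the second term I would use the fact that drawing $\yv\sim p(\yv)$ and then $\tilde\xv\sim p_x(\xv|\yv)$ is exactly sampling from $p_x(\xv,\yv) = p_x(\xv|\yv)p(\yv)$, so that expectation becomes $\int p_x(\xv,\yv)\log\big((1-D_1)D_2\big)\,d\xv\,d\yv$; similarly the third term becomes $\int p_y(\xv,\yv)\log\big((1-D_1)(1-D_2)\big)\,d\xv\,d\yv$. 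After this rewriting, $V$ is a single integral over $(\xv,\yv)$ whose integrand depends on the discriminators only through the pointwise values $D_1(\xv,\yv)$ and $D_2(\xv,\yv)$.

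Next, since $D_1$ and $D_2$ may be chosen independently at each point, I would maximize the integrand pointwise. Writing $a=D_1(\xv,\yv)$, $b=D_2(\xv,\yv)$ and abbreviating $p,p_x,p_y$ for the three joint densities evaluated at $(\xv,\yv)$, expanding the logarithms of products gives the integrand
\begin{align}
p\log a + (p_x+p_y)\log(1-a) + p_x\log b + p_y\log(1-b)\,. \nonumber
\end{align}
The key structural observation is that this expression separates additively into a part depending only on $a$ and a part depending only on $b$, so the two discriminators can be optimized separately.

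I would then invoke the elementary lemma that for nonnegative constants $\alpha,\beta$ (not both zero) the map $t\mapsto \alpha\log t + \beta\log(1-t)$ attains its unique maximum on $(0,1)$ at $t=\alpha/(\alpha+\beta)$, which follows by setting the derivative $\alpha/t-\beta/(1-t)$ to zero and checking concavity from the negative second derivative. Applying this with $(\alpha,\beta)=(p,\,p_x+p_y)$ for the $a$-part yields $D_1^*=p/(p+p_x+p_y)$, and with $(\alpha,\beta)=(p_x,p_y)$ for the $b$-part yields $D_2^*=p_x/(p_x+p_y)$, which are exactly the claimed expressions.

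The argument is mostly routine; the only point requiring care is the rewriting of the generator-driven expectations as integrals against the implied joint densities $p_x(\xv,\yv)$ and $p_y(\xv,\yv)$, together with checking that the supports cooperate so the pointwise optimum is well defined wherever the integrand is nonzero (the usual caveat that $D_2^*$ is determined only where $p_x+p_y>0$). This support and measurability bookkeeping is the main, if minor, obstacle; the pointwise calculus is immediate once the separability is noticed.
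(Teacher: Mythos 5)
Your proposal is correct and follows essentially the same route as the paper's proof: both rewrite the generator-driven expectations as integrals against the implied joint densities $p_x(\xv,\yv)$ and $p_y(\xv,\yv)$, expand the logarithms so the integrand separates into a $D_1$-part with weights $(p,\,p_x+p_y)$ and a $D_2$-part with weights $(p_x,\,p_y)$, and then apply the standard lemma that $t\mapsto \alpha\log t+\beta\log(1-t)$ is maximized at $\alpha/(\alpha+\beta)$. Your explicit remarks on separability and on the support caveat for $D_2^*$ are slightly more careful than the paper's presentation but do not constitute a different argument.
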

\begin{proof}
	The proof is a straightforward extension of the proof in~\cite{goodfellow2014generative}. See Appendix A for details.
\end{proof}
\begin{prop}
	The equilibrium of $V(G_x,G_y,D_1,D_2)$ is achieved if and only if $p(\xv,\yv)=p_x(\xv,\yv)=p_y(\xv,\yv)$ with $D_1^*(\xv,\yv) = \frac{1}{3}$ and $D_2^*(\xv,\yv) = \frac{1}{2} $, and the optimum value is $-3\log3$.
\end{prop}
\begin{proof}
Given the optimal $D_1^*(\xv,\yv)$ and $D_2^*(\xv,\yv)$, the minimax game can be reformulated as:
\begin{align}
C(G_x,G_y) &= \max_{D_1,D_2} V(G_x,G_y,D_1,D_2) \\
&= -3\log 3 + 3\cdot JSD \Big(p(\xv,\yv),p_x(\xv,\yv),p_y(\xv,\yv) \Big) \geq -3\log 3 \,,
\end{align}
where $JSD$ denotes the Jensen-Shannon divergence (JSD) among three distributions. See Appendix A for details.
\end{proof}
Since $p(\xv,\yv)=p_x(\xv,\yv)=p_y(\xv,\yv)$ can be achieved in theory, it can be readily seen that the learned conditional generators can reveal the true conditional distributions underlying the data, \ie, $p_x(\xv|\yv)=p(\xv|\yv)$ and $p_y(\yv|\xv)=p(\yv|\xv)$.
\subsection{Semi-supervised learning}\label{sec:triGAN_semi}
In order to further understand $\Delta$-GAN, we write~(\ref{eqn:triGAN}) as
\begin{align}
V=& \underbrace{\E_{p(\xv,\yv)} [\log D_1(\xv,\yv)] + \E_{p_x(\tilde{\xv},\yv)} [\log (1-D_1(\tilde{\xv},\yv))] + \E_{p_y(\xv,\tilde{\yv})} [\log (1-D_1(\xv,\tilde{\yv}))]}_\text{conditional GAN}  \\
+&  \underbrace{\E_{p_x(\tilde{\xv},\yv)} [\log D_2 (\tilde{\xv},\yv) ] + \E_{p_y(\xv,\tilde{\yv})} [\log (1-D_2 (\xv,\tilde{\yv})) ]}_\text{BiGAN/ALI} \,.
\end{align}
The objective of $\Delta$-GAN is a combination of the objectives  of conditional GAN and BiGAN. The BiGAN part matches two joint distributions: $p_x(\xv,\yv)$ and $p_y(\xv,\yv)$, while the conditional GAN part provides the supervision signal to notify the BiGAN part what joint distribution to match.
Therefore, $\Delta$-GAN provides a natural way to perform semi-supervised learning, since the conditional GAN part and the BiGAN part can be used to account for paired and unpaired data, respectively.

However, when doing semi-supervised learning, there is also one potential problem that we need to be cautious about. 
The theoretical analysis in Section~\ref{sec:theoretical_analysis} is based on the assumption that the dataset is fully supervised, \ie, we have the ground-truth joint distribution $p(\xv,\yv)$ and marginal distributions $p(\xv)$ and $p(\yv)$. In the semi-supervised setting, $p(\xv)$ and $p(\yv)$ are still available but $p(\xv,\yv)$ is not.
We can only obtain the joint distribution $p_l(\xv,\yv)$ characterized by the few paired data samples. Hence, in the semi-supervised setting, $p_x(\xv,\yv)$ and $p_y(\xv,\yv)$ will try to concentrate to the empirical distribution $p_l(\xv,\yv)$. We make the assumption that $p_l(\xv,\yv)\approx p(\xv,\yv)$, \ie, the paired data can roughly characterize the whole dataset. For example, in the semi-supervised classification problem, one usually strives to make sure that labels are equally distributed among the labeled dataset.

\subsection{Relation to Triple GAN} \label{sec:relation_to_triple_gan}
$\Delta$-GAN is closely related to Triple GAN~\cite{li2017triple}. Below we review Triple GAN and then discuss the main differences. The value function of Triple GAN is defined as follows:
\begin{align}
V=& \E_{p(\xv,\yv)} [\log D(\xv,\yv)] + (1-\alpha)\E_{p_x(\tilde{\xv},\yv)} [\log (1-D(\tilde{\xv},\yv))]
+ \alpha\E_{p_y(\xv,\tilde{\yv})} [\log (1-D(\xv,\tilde{\yv}))] \nonumber \\
+& \E_{p(\xv,\yv)} [-\log p_y(\yv|\xv)] \,,
\end{align}
where $\alpha \in (0,1)$ is a contant that controls the relative importance of the two generators. Let Triple GAN-s denote a simplified Triple GAN model with only the first three terms. As can be seen, Triple GAN-s can be considered as a combination of two conditional GANs, with the importance of each condtional GAN weighted by $\alpha$. It can be proven that Triple GAN-s achieves equilibrium if and only if $p(\xv,\yv) = (1-\alpha) p_x(\xv,\yv) + \alpha p_y (\xv,\yv)$, which is not desirable. To address this problem, in Triple GAN a standard supervised loss $\Rcal_{\Lcal} = \E_{p(\xv,\yv)} [-\log p_y(\yv|\xv)]$ is added. As a result, when the discriminator is optimal, the cost function in Triple GAN becomes: 
\begin{align}
2JSD\Big(p(\xv,\yv)|| ((1-\alpha)  p_x(\xv,\yv) + \alpha p_y (\xv,\yv)) \Big) + KL (p(\xv,\yv)|| p_y(\xv,\yv)) + \mbox{const.}
\end{align}
This cost function has the good property that it has a unique minimum at $p(\xv,\yv)=p_x(\xv,\yv)=p_y(\xv,\yv)$. However, the objective becomes asymmetrical. The second KL term pays low cost for generating fake-looking samples~\cite{arjovsky2017towards}. By contrast $\Delta$-GAN directly optimizes the {\em symmetric} Jensen-Shannon divergence among three distributions.  More importantly, the calculation of $\E_{p(\xv,\yv)} [-\log p_y(\yv|\xv)]$ in Triple GAN also implies that the explicit density form of $p_y(\yv|\xv)$ should be provided, which may not be desirable. On the other hand, $\Delta$-GAN only requires that $p_y(\yv|\xv)$ can be sampled from. For example, if we assume $p_y(\yv|\xv) = \int \delta (\yv - G_y(\xv,\zv)) p(\zv) d\zv$, and $\delta(\cdot)$ is the Dirac delta function, we can sample $\yv$ through sampling $\zv$, however, the density function of $p_y(\yv|\xv)$ is not explicitly available. 

\vspace{-1.5mm}
\subsection{Applications}
\vspace{-1.5mm}
$\Delta$-GAN is a general framework that can be used for any joint distribution matching. Besides the semi-supervised image classification task considered in~\cite{li2017triple}, we also conduct experiments on image-to-image translation and attribute-conditional image generation. When modeling image pairs, both $p_x(\xv|\yv)$ and $p_y(\yv|\xv)$ are implemented without introducing additional latent variables, \ie, $p_x(\xv|\yv) = \delta (\xv - G_x(\yv))$, $p_y(\yv|\xv) = \delta (\yv - G_y(\xv))$.

A different strategy is adopted when modeling the image-label/attribute pairs.
Specifically, let $\xv$ denote samples in the image domain, $\yv$ denote samples in the label/attribute domain. $\yv$ is a one-hot vector or a binary vector when representing labels and attributes, respectively.
When modeling $p_x(\xv|\yv)$, we assume that $\xv$ is transformed by the latent style variables $\zv$ given the label or attribute vector $\yv$, \ie, $p_x(\xv|\yv) = \int \delta (\xv - G_x(\yv,\zv))  p(\zv) d\zv$, where $p(\zv)$ is chosen to be a simple distribution (\eg, uniform or standard normal).
When learning $p_y(\yv|\xv)$, $p_y(\yv|\xv)$ is assumed to be a standard multi-class or multi-label classfier without latent variables $\zv$. In order to allow the training signal backpropagated from $D_1$ and $D_2$ to $G_y$, we adopt the REINFORCE algorithm as in~\cite{li2017triple}, and use the label with the maximum probability to approximate the expectation over $\yv$, or use the output of the sigmoid function as the predicted attribute vector.

\vspace{-3.0mm}
\section{Related work}
\vspace{-2.5mm}
The proposed framework focuses on designing GAN for joint-distribution matching. 
Conditional GAN can be used for this task if supervised data is available. Various conditional GANs have been proposed to condition the image generation on class labels~\cite{mirza2014conditional}, attributes~\cite{perarnau2016invertible}, texts~\cite{reed2016generative,zhang2016stackgan} and images~\cite{isola2016image,ledig2016photo}.
Unsupervised learning methods have also been developed for this task. BiGAN~\cite{donahue2016adversarial} and ALI~\cite{dumoulin2016adversarially} proposed a method to jointly learn a generation network and an inference network via adversarial learning. Though originally designed for learning the two-way transition between the stochastic latent variables and real data samples, BiGAN and ALI can be directly adapted to learn the joint distribution of two real domains. 
Another method is called DiscoGAN~\cite{kim2017learning}, in which two generators are used to model the bidirectional mapping between domains, and another two discriminators are used to decide whether a generated sample is fake or not in each individual domain. Further, additional reconstructon losses are introduced to make the two generators strongly coupled and also alleviate the problem of mode collapsing.  
Similiar work includes CycleGAN~\cite{zhu2017unpaired}, DualGAN~\cite{yi2017dualgan} and DTN~\cite{taigman2016unsupervised}. Additional weight-sharing constraints are introduced in CoGAN~\cite{liu2016coupled} and UNIT~\cite{liu2017unsupervised}. 

Our work differs from the above work in that we aim at semi-supervised joint distribution matching. The only work that we are aware of that also achieves this goal is Triple GAN. However, our model is distinct from Triple GAN in important ways (see Section~\ref{sec:relation_to_triple_gan}). Further, Triple GAN only focuses on image classification, while $\Delta$-GAN has been shown to be applicable to a wide range of applications.  

Various methods and model architectures have been proposed to improve and stabilize the training of GAN, such as feature matching~\cite{salimans2016improved,zhang2016generating,zhang2017adversarial}, Wasserstein GAN~\cite{arjovsky2017wasserstein}, energy-based GAN~\cite{zhao2016energy}, and unrolled GAN~\cite{metz2016unrolled} among many other related works. Our work is orthogonal to these methods, which could also be used to improve the training of $\Delta$-GAN.
%
%
Instead of using adversarial loss, there also exists work that uses supervised learning~\cite{xia2017dual} for joint-distribution matching, and variational autoencoders for semi-supervised learning~\cite{pu2016variational,pu2017vae}. Lastly, our work is also closely related to the recent work of~\cite{li2017alice,pu2017adversarial,pu2017symmetric}, which treats one of the domains as latent variables. 

\vspace{-3.0mm}
\section{Experiments}
\vspace{-2.5mm}
We present results on three tasks: (\emph{i}) semi-supervised classification on CIFAR10~\cite{krizhevsky2009learning}; (\emph{ii}) image-to-image translation on MNIST~\cite{lecun1998gradient} and the edges2shoes dataset~\cite{isola2016image}; and (\emph{iii}) attribute-to-image generation on CelebA~\cite{liu2015deep} and COCO~\cite{lin2014microsoft}. We also conduct a toy data experiment to further demonstrate the differences between $\Delta$-GAN and Triple GAN. We implement $\Delta$-GAN without introducing additional regularization unless explicitly stated. All the network architectures are provided in the Appendix. 
%
\vspace{-2.00mm}
\subsection{Toy data experiment}
\vspace{-2.00mm}
\begin{figure}[t!]
	\centering
	\includegraphics[width=0.99\textwidth]{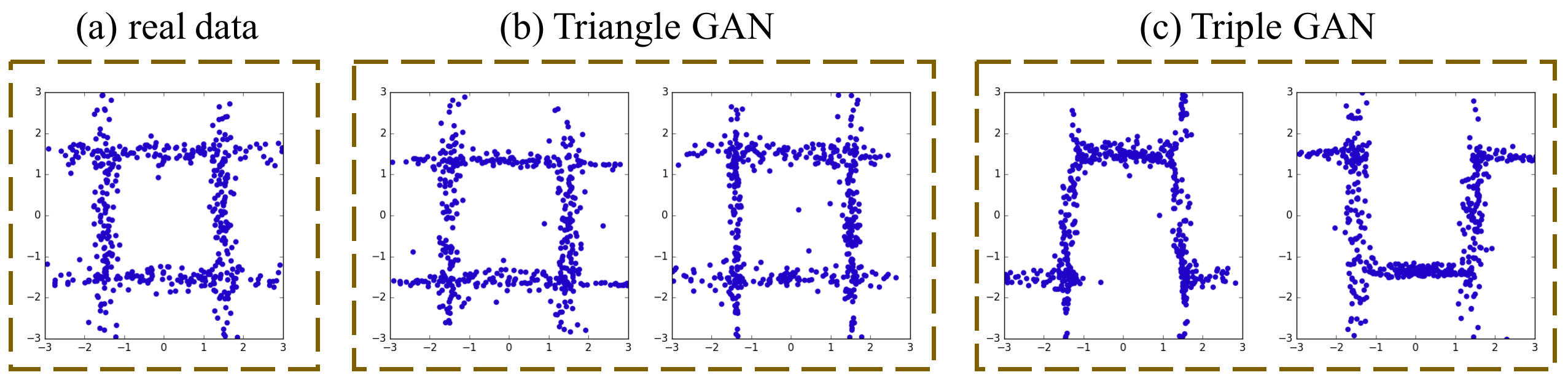}
	\caption{\small{Toy data experiment on $\Delta$-GAN and Triple GAN. (a) the joint distribution $p(x,y)$ of real data. For (b) and (c), the left and right figure is the learned joint distribution $p_x(x,y)$  and $p_y(x,y)$, respectively. }}
	\label{fig:toy_data}
	\vspace{-3.0mm}
\end{figure}
We first compare our method with Triple GAN on a toy dataset. 
We synthesize data by drawing $(x,y) \sim \tfrac{1}{4}\Ncal(\muv_1, \Sigmamat_1) + \tfrac{1}{4}\Ncal(\muv_2, \Sigmamat_2) + \tfrac{1}{4}\Ncal(\muv_3, \Sigmamat_3) + \tfrac{1}{4}\Ncal(\muv_4, \Sigmamat_4)$, where $\muv_1 = [0,1.5]^\top$, $\muv_2 = [-1.5, 0]^\top$, $\muv_3 = [1.5,0]^\top$, $\muv_4 = [0, -1.5]^\top$,  $\Sigmamat_1 = \Sigmamat_4 = \left( \begin{smallmatrix} 3&0\\ 0&0.025 \end{smallmatrix} \right)$ and $\Sigmamat_2 = \Sigmamat_3 = \left( \begin{smallmatrix} 0.025&0\\ 0&3 \end{smallmatrix} \right)$. We generate 5000 $(x,y)$ pairs for each mixture component. In order to implement $\Delta$-GAN and Triple GAN-s, we model $p_x(x|y)$ and $p_y(y|x)$ as
	$p_x(x|y) = \int \delta (x-G_x(y,\zv)) p(\zv) d\zv,  p_y(y|x) = \int \delta (y-G_y(x,\zv)) p(\zv) d\zv$
%
where both $G_x$ and $G_y$ are modeled as a 4-hidden-layer multilayer perceptron (MLP) with 500 hidden units in each layer. $p(\zv)$ is a bivariate standard Gaussian distribution.
Triple GAN can be implemented by specifying both $p_x(x|y)$ and $p_y(y|x)$ to be distributions with explicit density form, \emph{e.g.}, Gaussian distributions. However, the performance can be bad since it fails to capture the multi-modality of $p_x(x|y)$ and $p_y(y|x)$.
Hence, only Triple GAN-s is implemented.

Results are shown in Figure~\ref{fig:toy_data}. The joint distributions $p_x(x,y)$ and $p_y(x,y)$ learned by $\Delta$-GAN successfully match the true joint distribution $p(x,y)$. Triple GAN-s cannot achieve this, and can only guarantee $\frac{1}{2}(p_x(x,y)+p_y(x,y))$ matches $p(x,y)$.
Although this experiment is limited due to its simplicity, the results clearly support the advantage of our proposed model over Triple GAN. 
\vspace{-2.00mm}
\subsection{Semi-supervised classification}
\vspace{-2.00mm}
\begin{table} [t!]
	\begin{center}
		\begin{minipage}{0.34\linewidth}
			\caption{\small{Error rates (\%) on the partially labeled CIFAR10 dataset.}}\label{Table:cifar}
			\centering
			\small
			\begin{tabular}{lc}
					\toprule
					Algorithm & $n=4000$ \\
					\midrule 
					CatGAN~\cite{springenberg2015unsupervised}  & 19.58 {\scriptsize$\pm$ 0.58} \\
					Improved GAN~\cite{salimans2016improved} & 18.63 {\scriptsize$\pm$ 2.32} \\
					ALI~\cite{dumoulin2016adversarially}  & 17.99 {\scriptsize$\pm$ 1.62} \\
					Triple GAN~\cite{li2017triple} & 16.99 {\scriptsize$\pm$ 0.36} \\
					\midrule
					$\Delta$-GAN (ours)  & {\bf16.80 {\scriptsize$\pm$ 0.42}} \\
					\bottomrule
			\end{tabular}
		\end{minipage}
		\hspace{0.75cm}
		\begin{minipage}{0.55\linewidth}
			\caption{\small{Classification accuracy (\%) on the MNIST-to-MNIST-transpose dataset.}}\label{Table:mnist}
			\centering
			\small
			\begin{tabular}{lccc}
				\toprule
				Algorithm & $n=100$ & $n=1000$ & All\\
				\midrule 
				DiscoGAN & $-$&$-$ & 15.00{\scriptsize$\pm$ 0.20}\\
				Triple GAN & 63.79 {\scriptsize$\pm$ 0.85} & 84.93 {\scriptsize$\pm$ 1.63} & 86.70 {\scriptsize$\pm$ 1.52}\\
				\midrule
				$\Delta$-GAN  & {\bf83.20{\scriptsize$\pm$ 1.88}} & {\bf88.98{\scriptsize$\pm$ 1.50}}  & {\bf93.34{\scriptsize$\pm$ 1.46}} \\
				\bottomrule
			\end{tabular}
		\end{minipage}
	\end{center}
	\vspace{-5.5mm}
\end{table}

We evaluate semi-supervised classification on the CIFAR10 dataset with 4000 labels. The labeled data is distributed equally across classes and the results are averaged over 10 runs with different random splits of the training data. 
For fair comparison, we follow the publically available code of Triple GAN and use the same regularization terms and hyperparameter settings as theirs. Results are summarized in Table~\ref{Table:cifar}. Our $\Delta$-GAN achieves the best performance among all the competing methods.
We also show the ability of $\Delta$-GAN to disentangle classes and styles in Figure~\ref{fig:cifar}. $\Delta$-GAN can generate realistic data in a specific class and the injected noise vector encodes meaningful style patterns like background and color. 
\begin{figure}[t!]
	\centering
	\begin{minipage}{0.42\linewidth}
		\includegraphics[width=\linewidth]{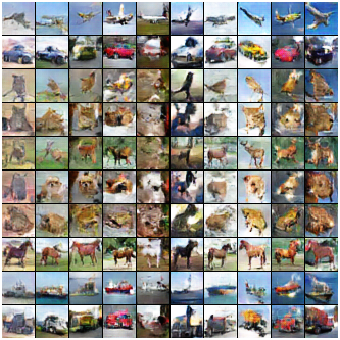}
		\caption{\small{Generated CIFAR10 samples, where each row shares the same label and each column uses the same noise.  }}\label{fig:cifar}
	\end{minipage}
	\hspace{0.3cm}
	\begin{minipage}{0.50\linewidth}
		\includegraphics[width=\linewidth]{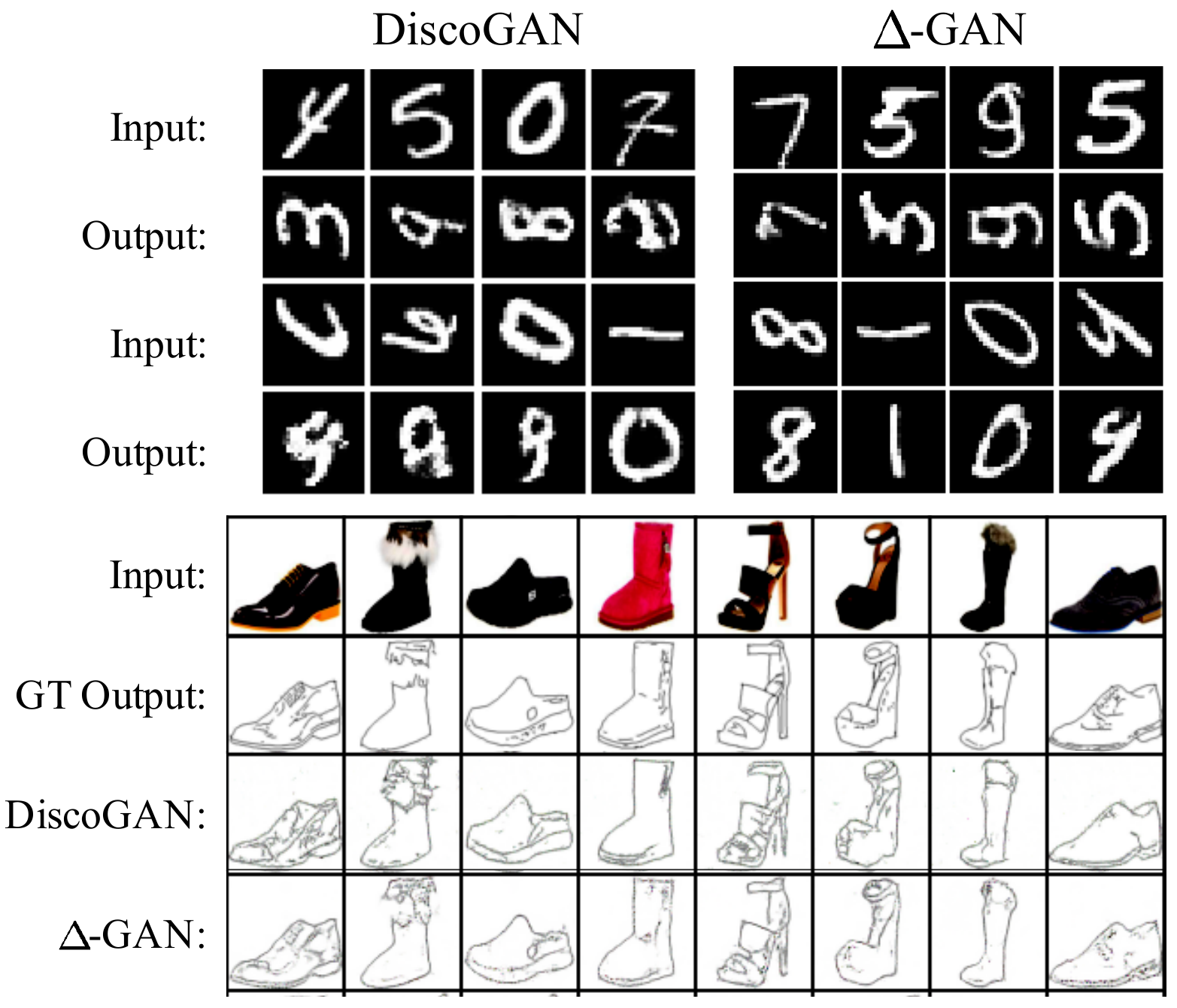} 
		\caption{\small{Image-to-image translation experiments on the MNIST-to-MNIST-transpose and edges2shoes datasets.}}\label{fig:image2image}
	\end{minipage}
	\vspace{-4mm}
\end{figure}

\vspace{-2.00mm}
\subsection{Image-to-image translation}
\vspace{-2.00mm}
We first evaluate image-to-image translation on the edges2shoes dataset. Results are shown in Figure~\ref{fig:image2image}(bottom). Though DiscoGAN is an unsupervised learning method, it achieves impressive results. However, with supervision provided by 10\% paired data, $\Delta$-GAN generally generates more accurate edge details of the shoes. In order to provide quantitative evaluation of translating shoes to edges, we use mean squared error (MSE) as our metric. The MSE of using DiscoGAN is 140.1; with 10\%, 20\%, 100\% paired data, the MSE of using $\Delta$-GAN is 125.3, 113.0 and 66.4, respectively. 

To further demonstrate the importance of providing supervision of domain correspondence, we created a new dataset based on MNIST~\cite{lecun1998gradient}, where the two image domains are the MNIST images and their corresponding tranposed ones. As can be seen in Figure~\ref{fig:image2image}(top), $\Delta$-GAN matches images betwen domains well, while DiscoGAN fails in this task. For supporting quantitative evaluation, we have trained a classifier on the MNIST dataset, and the classification accuracy of this classifier on the test set approaches 99.4\%, and is, therefore, trustworthy as an evaluation metric. Given an input MNIST image $\xv$, we first generate a transposed image $\yv$ using the learned generator, and then manually transpose it back to normal digits $\yv^T$ , and finally send this new image $\yv^T$ to the classifier. Results are summarized in Table~\ref{Table:mnist}, which are averages over 5 runs with different random splits of the training data. $\Delta$-GAN achieves significantly better performance than Triple GAN and DiscoGAN.

\vspace{-1mm}
\subsection{Attribute-conditional image generation}
\vspace{-1mm}
\begin{figure}[t!]
	\centering
	\includegraphics[width=0.99\textwidth]{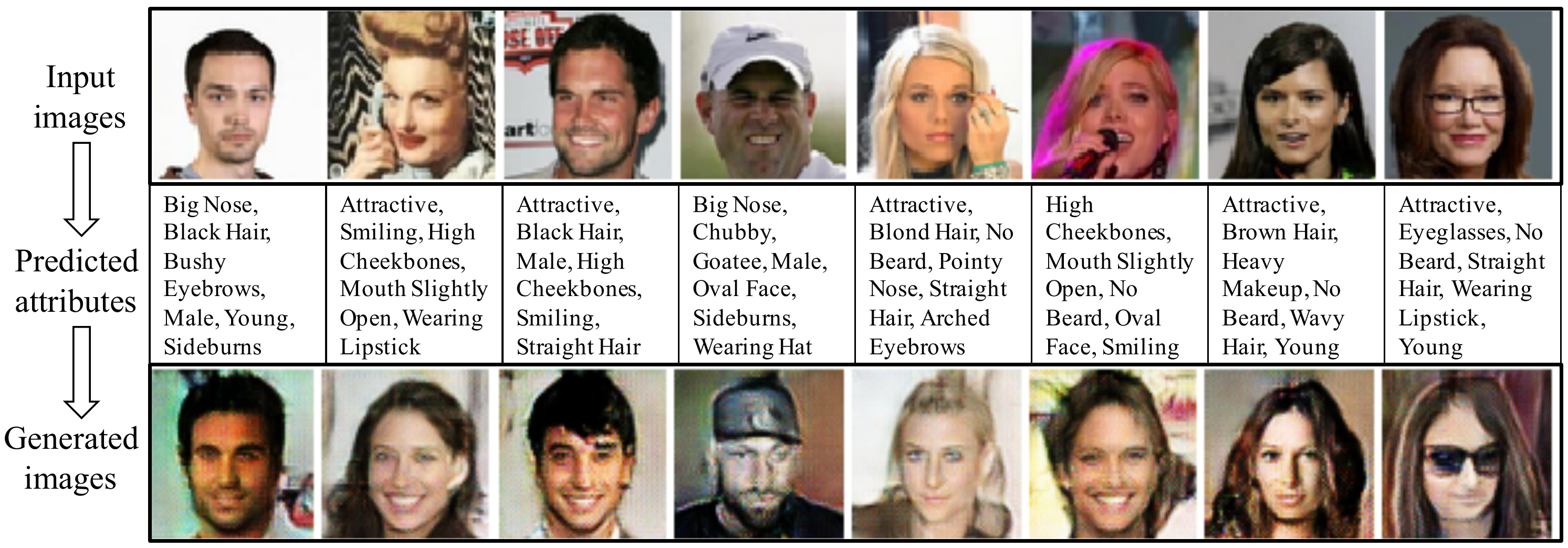}
	\caption{\small{Results on the face-to-attribute-to-face experiment. The 1st row is the input images; the 2nd row is the predicted attributes given the input images; the 3rd row is the generated images given the predicted attributes.}}
	\label{fig:face_att_face}
	\vspace{-4.0mm}
\end{figure}
\begin{table}[t!]
	\caption{\small{Results of P@10 and nDCG@10 for attribute predicting on CelebA and COCO.}}\label{Table:coco}
	\centering
	\small
	\begin{tabular}{lccc|ccc}
		\toprule
		\textbf{Dataset} & \multicolumn{3}{c|}{\textbf{CelebA}} & \multicolumn{3}{c}{\textbf{COCO}} \\
		\midrule
		\textbf{Method} & \textbf{1\%} & \textbf{10\%} & \textbf{100\%} & \textbf{10\%} &  \textbf{50\%} &   \textbf{100\%}  \\
		\midrule 
		Triple GAN & 40.97/50.74 & 62.13/73.56 & 70.12/79.37& 32.64/35.91 & 34.00/37.76 & 35.35/39.60\\
		$\Delta$-GAN & {\bf 53.21/58.39}  & {\bf 63.68/75.22} & {\bf 70.37/81.47}  & {\bf 34.38/37.91} & {\bf 36.72/40.39} & {\bf 39.05/42.86}  \\
		\bottomrule
	\end{tabular}
	\vspace{-5.0mm}
\end{table}
\begin{figure}[t!]
	\centering
	\includegraphics[width=0.99\textwidth]{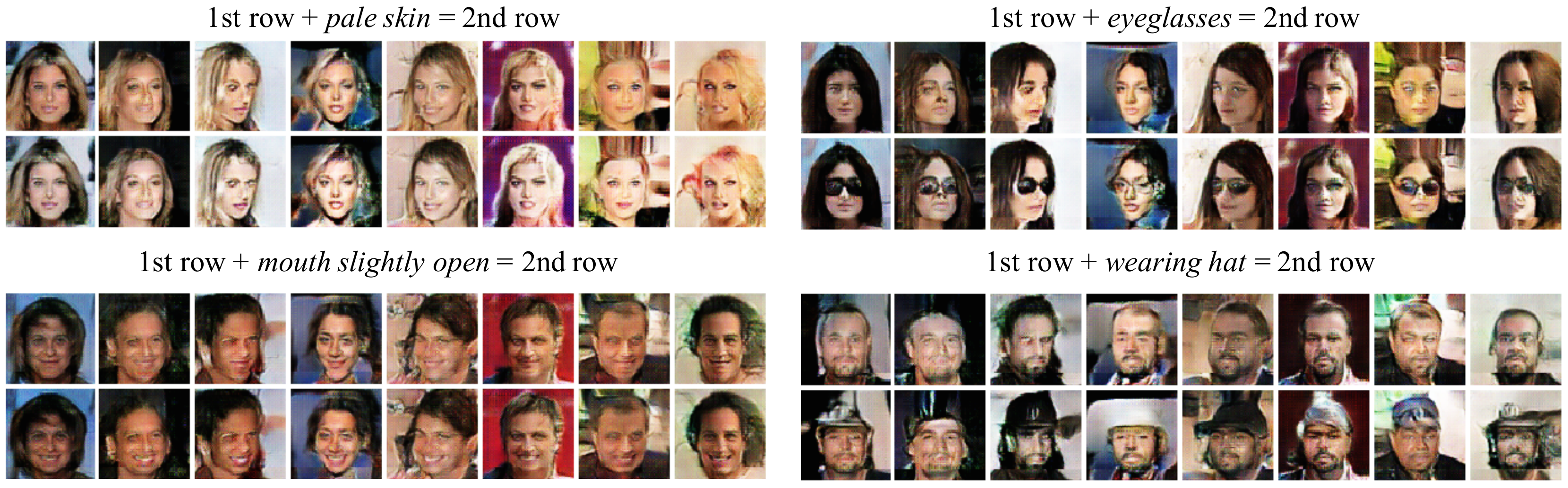}
	\caption{\small{Results on the image editing experiment. }}
	\label{fig:image_editing}
	\vspace{-3.0mm}
\end{figure}
\begin{figure}[t!]
	\centering
	\includegraphics[width=0.99\textwidth]{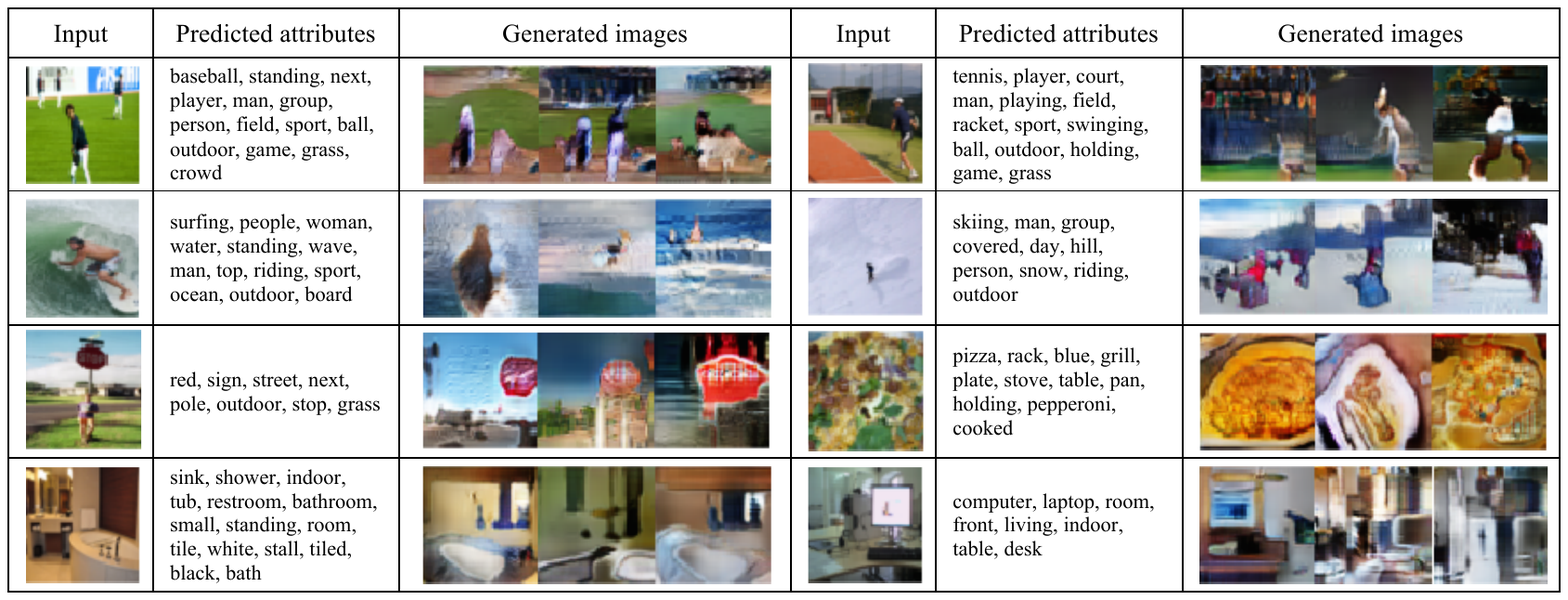}
	\caption{\small{Results on the image-to-attribute-to-image experiment.}}
	\label{fig:coco_result}
	\vspace{-5.0mm}
\end{figure}
We apply our method to face images from the CelebA dataset. This dataset consists of 202,599 images annotated with 40 binary attributes. We scale and crop the images to $64\times 64$ pixels. In order to qualitatively evaluate the learned attribute-conditional image generator and the multi-label classifier, given an input face image, we first use the classifier to predict attributes, and then use the image generator to produce images based on the predicted attributes. Figure~\ref{fig:face_att_face} shows example results. Both the learned attribute predictor and the image generator provides good results. We further show another set of image editing experiment in Figure~\ref{fig:image_editing}. 
For each subfigure, we use a same set of attributes with different noise vectors to generate images. For example, for the top-right subfigure, all the images in the 1st row were generated based on the following attributes: \emph{black hair, female, attractive}, and we then added the attribute of ``\emph{sunglasses}'' when generating the images in the 2nd row. 
It is interesting to see that $\Delta$-GAN has great flexibility to adjust the generated images by changing certain input attribtutes. For instance, by switching on the \emph{wearing hat} attribute, one can edit the face image to have a hat on the head.

In order to demonstrate the scalablility of our model to large and complex datasets, we also present results on the COCO dataset. Following~\cite{SCN_CVPR2017}, we first select a set of 1000 attributes
from the caption text in the training set, which includes the most frequent nouns, verbs, or adjectives. 
The images  in COCO are scaled and cropped to have $64\times 64$ pixels. Unlike the case of CelebA face images, the networks need to learn how to handle multiple objects and diverse backgrounds. 
Results are provided in Figure~\ref{fig:coco_result}. We can generate reasonably good images based on the predicted attributes. The input and generated images also clearly share a same set of attributes. We also observe diversity in the samples by simply drawing multple noise vectors and using the same predicted attributes. 

Precision (P) and normalized Discounted Cumulative Gain (nDCG) are two popular evaluation metrics for multi-label classification problems. Table~\ref{Table:coco} provides the quantatitive results of P@10 and nDCG@10 on CelebA and COCO, where @$k$ means at rank $k$ (see the Appendix for definitions). For fair comparison, we use the same network architecures for both Triple GAN and $\Delta$-GAN. $\Delta$-GAN consistently provides better results than Triple GAN. On the COCO dataset, our semi-supervised learning approach with 50\% labeled data achieves better performance than 
the results of Triple GAN using the full dataset, demonstrating the effectiveness
of our approach for semi-supervised joint distribution matching.
More results for the above experiments are provided in the Appendix.

\vspace{-3.00mm}
\section{Conclusion}
\vspace{-2.50mm}
We have presented the Triangle Generative Adversarial Network ($\Delta$-GAN), a new GAN framework that can be used for semi-supervised joint distribution matching. Our approach learns the bidirectional mappings between two domains with a few paired samples. We have demonstrated that $\Delta$-GAN may be employed for a wide range of applications. One possible future direction is to combine $\Delta$-GAN with sequence GAN~\cite{yu2017seqgan} or textGAN~\cite{zhang2017adversarial} to model the joint distribution of image-caption pairs.

\vspace{-3.0mm}
\paragraph{Acknowledgements} This research was supported in part by ARO, DARPA, DOE, NGA and ONR. 

\small{
	\bibliographystyle{unsrt}
	\bibliography{nips2017}
}

\newpage
\appendix

\section{Detailed theoretical analysis} \label{sec:detailed_theorecitcal_analysis}
\begin{prop}
	For any fixed generator $G_x$ and $G_y$, the optimal discriminator $D_1$ and $D_2$ of the game defined by the value function $V(G_x,G_y,D_1,D_2)$ is
	\begin{align}
	D_1^*(\xv,\yv) = \frac{p(\xv,\yv)}{p(\xv,\yv)+p_x(\xv,\yv)+p_y(\xv,\yv)}, \,\, D_2^*(\xv,\yv) = \frac{p_x(\xv,\yv)}{p_x(\xv,\yv)+p_y(\xv,\yv)} \nonumber \,.
	\end{align}
\end{prop}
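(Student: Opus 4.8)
The plan is to adapt the pointwise-maximization argument of Goodfellow et al.\ to the present setting with two discriminators and three joint distributions. First I would rewrite each of the three expectations in $V(G_x,G_y,D_1,D_2)$ as an integral against a single joint density on $(\xv,\yv)$. The relevant observation is already recorded in the model description: by definition $p_x(\xv,\yv)=p_x(\xv|\yv)p(\yv)$ and $p_y(\xv,\yv)=p_y(\yv|\xv)p(\xv)$. Hence the term weighted by the draw $\yv\sim p(\yv),\tilde\xv\sim p_x(\xv|\yv)$ integrates against $p_x(\xv,\yv)$, and likewise the last term integrates against $p_y(\xv,\yv)$. This collapses $V$ into one integral over $(\xv,\yv)$ whose integrand, at each fixed point, is a function of only the two scalars $D_1(\xv,\yv)$ and $D_2(\xv,\yv)$, so the maximization can be carried out pointwise.

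Writing $a=p(\xv,\yv)$, $b=p_x(\xv,\yv)$, $c=p_y(\xv,\yv)$ and $u=D_1$, $v=D_2$, the integrand is $a\log u + b\log\big((1-u)v\big) + c\log\big((1-u)(1-v)\big)$. The key manipulation is to expand the products inside the logarithms and regroup so that the objective decouples into a $u$-part and a $v$-part, namely $\big[a\log u+(b+c)\log(1-u)\big]+\big[b\log v+c\log(1-v)\big]$. The two bracketed pieces share no variables and can therefore be maximized independently.

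Each piece has the standard form $\alpha\log t+\beta\log(1-t)$, which for $\alpha,\beta>0$ is strictly concave on $(0,1)$ with unique maximizer $t=\alpha/(\alpha+\beta)$. Applying this to the $u$-piece gives $D_1^*=a/(a+b+c)$, and to the $v$-piece gives $D_2^*=b/(b+c)$, which are exactly the claimed expressions. Strict concavity guarantees these are genuine maxima, and I would restrict attention to the support where $a+b+c>0$ so that the maximizers are well defined.

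I do not expect a serious obstacle here, since the result is a direct extension of the single-discriminator GAN analysis. The only point requiring care is the algebraic regrouping that decouples $u$ and $v$: the factor $(1-u)$ appears inside both the $b$- and $c$-weighted logarithms, so these contributions must be collected into the single coefficient $(b+c)\log(1-u)$ before the separation becomes visible. Once that decoupling is made explicit, the remainder is the routine one-variable calculation above.
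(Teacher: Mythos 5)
Your proposal is correct and follows essentially the same route as the paper's own proof: rewrite the expectations as integrals against the joint densities $p(\xv,\yv)$, $p_x(\xv,\yv)$, $p_y(\xv,\yv)$, expand the logarithms of products so the $D_1$ and $D_2$ contributions decouple into $p\log D_1 + (p_x+p_y)\log(1-D_1)$ and $p_x\log D_2 + p_y\log(1-D_2)$, and then apply Goodfellow's pointwise lemma that $\alpha\log t + \beta\log(1-t)$ is maximized at $t=\alpha/(\alpha+\beta)$. Your write-up is in fact slightly more explicit than the paper's, which performs the same expansion and regrouping implicitly when listing the five integral terms.
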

\begin{proof}
	The training criterion for the discriminator $D_1$ and $D_2$, given any generator $G_x$ and $G_y$, is to maximize the quantity $V(G_x,G_y,D_1,D_2)$:
	\begin{align*}
	V(G_x,G_y,D_1,D_2) &= \int_{\xv}\int_{\yv} p(\xv,\yv) \log D_1(\xv,\yv) d\xv d\yv + \int_{\xv}\int_{\yv} p_x(\xv,\yv) \log (1-D_1(\xv,\yv)) d\xv d\yv \\
	&+ \int_{\xv}\int_{\yv} p_x(\xv,\yv) \log D_2(\xv,\yv) d\xv d\yv + \int_{\xv}\int_{\yv} p_y(\xv,\yv) \log (1-D_1(\xv,\yv)) d\xv d\yv \\
	&+ \int_{\xv}\int_{\yv} p_y(\xv,\yv) \log (1-D_2(\xv,\yv)) d\xv d\yv \,.
	\end{align*}
	Following~\cite{goodfellow2014generative}, for any $(a,b) \in \R^2 \backslash \{0,0\}$, the function $y \rightarrow a\log y + b\log (1-y)$ achieves its maximum in $[0,1]$ at $\frac{a}{a+b}$. This concludes the proof.
\end{proof}

\begin{prop}
	The equilibrium of $V(G_x,G_y,D_1,D_2)$ is achieved if and only if $p(\xv,\yv)=p_x(\xv,\yv)=p_y(\xv,\yv)$ with $D_1^*(\xv,\yv) = \frac{1}{3}$ and $D_2^*(\xv,\yv) = \frac{1}{2} $, and the optimum value is $-3\log3$.
\end{prop}
\begin{proof}
	Given the optimal $D_1^*(\xv,\yv)$ and $D_2^*(\xv,\yv)$, the minimax game can be reformulated as:
	\begin{align}
	C(G_x,G_y) &= \max_{D_1,D_2} V(G_x,G_y,D_1,D_2) \\
	&= \E_{(\xv,\yv)\sim p(\xv,\yv)} \Big[\log \frac{p(\xv,\yv)}{p(\xv,\yv)+p_x(\xv,\yv)+p_y(\xv,\yv)} \Big] \\
	&+ \E_{(\xv,\yv)\sim p_x(\xv,\yv)}\Big[\log \frac{p_x(\xv,\yv)}{p(\xv,\yv)+p_x(\xv,\yv)+p_y(\xv,\yv)} \Big] \\
	&+ \E_{(\xv,\yv)\sim p_y(\xv,\yv)}\Big[\log \frac{p_y(\xv,\yv)}{p(\xv,\yv)+p_x(\xv,\yv)+p_y(\xv,\yv)} \Big] \,.
	\end{align}
	
	Note that
	\begin{align}
	C(G_1,G_2) = -3\log 3 &+ KL\Big(p(\xv,\yv)\Big|\Big|\frac{p(\xv,\yv)+p_x(\xv,\yv)+p_y(\xv,\yv)}{3}\Big) \\
	&+ KL\Big(p_x(\xv,\yv)\Big|\Big|\frac{p(\xv,\yv)+p_x(\xv,\yv)+p_y(\xv,\yv)}{3}\Big) \\
	&+ KL\Big(p_y(\xv,\yv)\Big|\Big|\frac{p(\xv,\yv)+p_x(\xv,\yv)+p_y(\xv,\yv)}{3}\Big)  \,.
	\end{align}
	Therefore,
	\begin{align}
	C(G_1,G_2) = -3\log 3 + 3\cdot JSD \Big(p(\xv,\yv),p_x(\xv,\yv),p_y(\xv,\yv) \Big) \geq -3\log 3 \,,
	\end{align}
	where 
	$JSD_{\pi_1,\ldots,\pi_n} (p_1, p_2, \ldots, p_n) = H\Big( \sum_{i=1}^n \pi_i p_i \Big) - \sum_{i=1}^n \pi_i H(p_i)$ is the Jensen-Shannon divergence. $\pi_1,\ldots,\pi_n$ are weights that are selected for the probability distribution $p_1,p_2,\ldots,p_n$, and $H(p)$ is the entropy for distribution $p$. In the three-distribution case described above, we set $n=3$ and $\pi_1=\pi_2=\pi_3=\frac{1}{3}$.
	
	For $p(\xv,\yv) = p_x(\xv,\yv) = p_y(\xv,\yv)$, we have $D_1^*(\xv,\yv) = \frac{1}{3}$, $D_2^*(\xv,\yv) = \frac{1}{2} $ and $C(G_x,G_y) = -3\log 3$. Since the Jensen-Shannon divergence is always non-negative, and zero iff they are equal, we have shown that $C^*=-3\log3$ is the global minimum of $C(G_x,G_y)$ and that the only solution is $p(\xv,\yv)=p_x(\xv,\yv)=p_y(\xv,\yv)$, \ie, the generative models perfectly replicating the data distribution.
\end{proof}

\section{$\Delta$-GAN training procedure} \label{sec:algorithm}

\begin{algorithm}[h!]
	\begin{algorithmic}
		\State $\thetav_{g}, \thetav_{d} \gets \text{initialize network parameters}$
		\Repeat
		\State $(\xv_p^{(1)},\yv_p^{(1)}), \ldots, (\xv_p^{(M)},\yv_p^{(M)})  \sim p(\xv,\yv)$ 
		\Comment{Get $M$ paired data samples}
		\State $\xv_u^{(1)}, \ldots, \xv_u^{(M)} \sim p(\xv)$
		\Comment{Get $M$ unpaired data samples}
		\State $\yv_u^{(1)}, \ldots, \yv_u^{(M)} \sim p(\yv)$
		\State $\tilde{\xv}_u^{(i)} \sim p_x(\xv|\yv = \yv_u^{(i)}), \quad i = 1, \ldots, M$
		\Comment{Sample from the conditionals}
		\State $\tilde{\yv}_u^{(j)} \sim p_y(\yv|\xv = \xv_u^{(j)}), \quad j = 1, \ldots, M$
		\State $\rho_{11}^{(i)} \leftarrow D_1 (\xv_p^{(i)},\yv_p^{(i)}) , \quad i = 1, \ldots, M$ 
		\Comment{Compute discriminator predictions}
		\State $\rho_{12}^{(i)} \leftarrow D_1 (\tilde{\xv}_u^{(i)},\yv_u^{(i)})$, $\rho_{13}^{(i)} \leftarrow D_1 (\xv_u^{(i)},\tilde{\yv}_u^{(i)}), \quad i = 1, \ldots, M$ 
		\State $\rho_{21}^{(i)} \leftarrow D_2 (\tilde{\xv}_u^{(i)},\yv_u^{(i)})$, $\rho_{22}^{(i)} \leftarrow D_2 (\xv_u^{(i)},\tilde{\yv}_u^{(i)}), \quad i = 1, \ldots, M$
		\State $\Lcal_{d_1} \leftarrow -\frac{1}{M}\sum_{i=1}^M \log \rho_{11}^{(i)} - \frac{1}{M}\sum_{j=1}^M \log (1-\rho_{12}^{(j)}) -\frac{1}{M}\sum_{k=1}^M \log (1-\rho_{13}^{(k)})$
		\State $\Lcal_{d_2} \leftarrow -\frac{1}{M}\sum_{i=1}^M \log \rho_{21}^{(i)} - \frac{1}{M}\sum_{j=1}^M \log (1-\rho_{22}^{(j)})$
		\Comment{Compute discriminator loss}
		\State $\Lcal_{g_1} \leftarrow -\frac{1}{M}\sum_{i=1}^M \log \rho_{12}^{(i)} - \frac{1}{M}\sum_{j=1}^M \log (1-\rho_{21}^{(j)})$
		\Comment{Compute generator loss}
		\State $\Lcal_{g_2} \leftarrow -\frac{1}{M}\sum_{i=1}^M \log \rho_{13}^{(i)} - \frac{1}{M}\sum_{j=1}^M \log \rho_{22}^{(j)}$
		\State $\thetav_d \gets \thetav_d - \nabla_{\thetav_d} (\Lcal_{d_1} + \Lcal_{d_2})$
		\Comment{Gradient update on discriminator networks}
		\State $\thetav_g \gets \thetav_g - \nabla_{\thetav_g} (\Lcal_{g_1} + \Lcal_{g_2})$
		\Comment{Gradient update on generator networks}
		\Until{convergence}
	\end{algorithmic}
	\caption{\label{alg:triGAN} $\Delta$-GAN training procedure.}
\end{algorithm}

\section{Additional experimental results}



\begin{figure}[t!]
	\centering
	\includegraphics[width=0.99\textwidth]{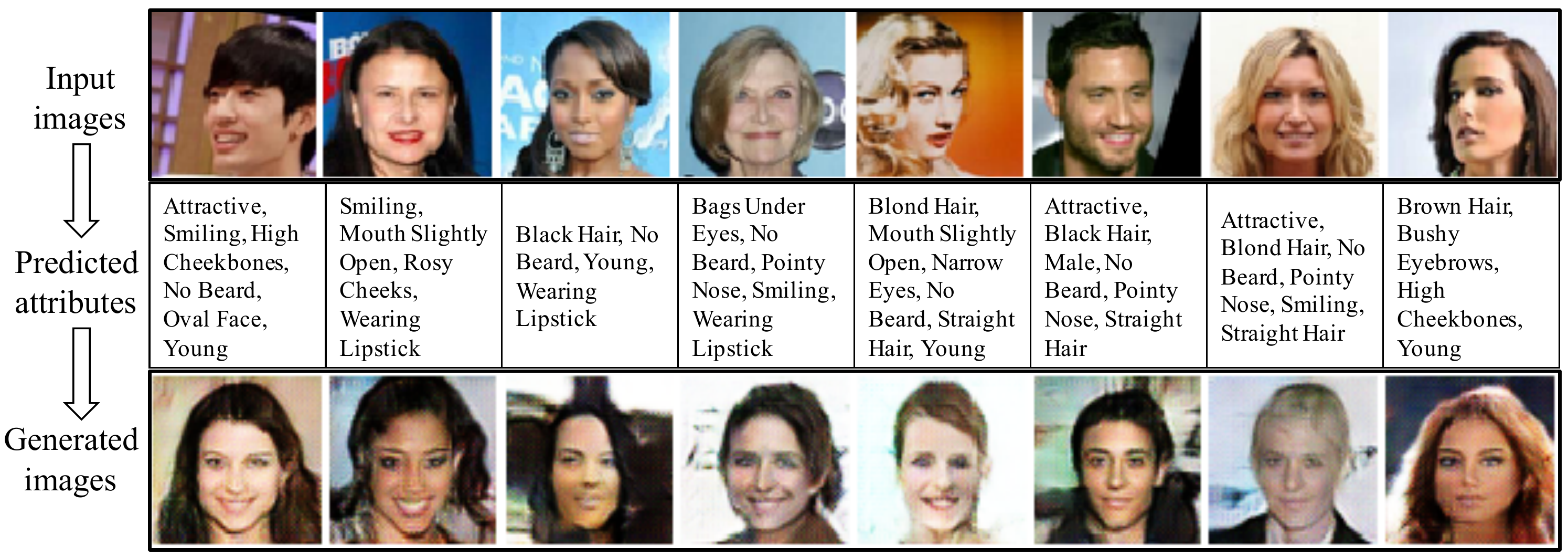}
	\caption{\small{Additional results on the face-to-attribute-to-face experiment.}}
	\label{fig:face_att_face_supp}
\end{figure}

\begin{figure}[t!]
	\centering
	\includegraphics[width=0.99\textwidth]{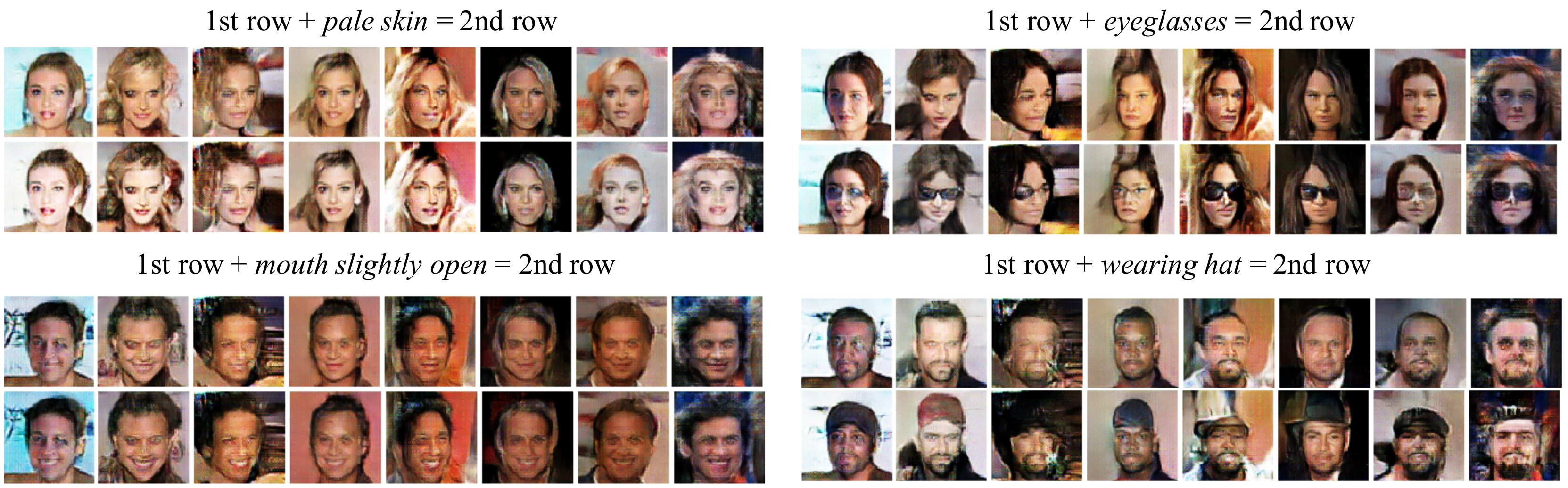}
	\caption{\small{Additional results on the image editing experiment. }}
	\label{fig:image_editing_supp}
	\vspace{-3.0mm}
\end{figure}

\clearpage
\begin{figure}[t!]
	\centering
	\includegraphics[width=0.99\textwidth]{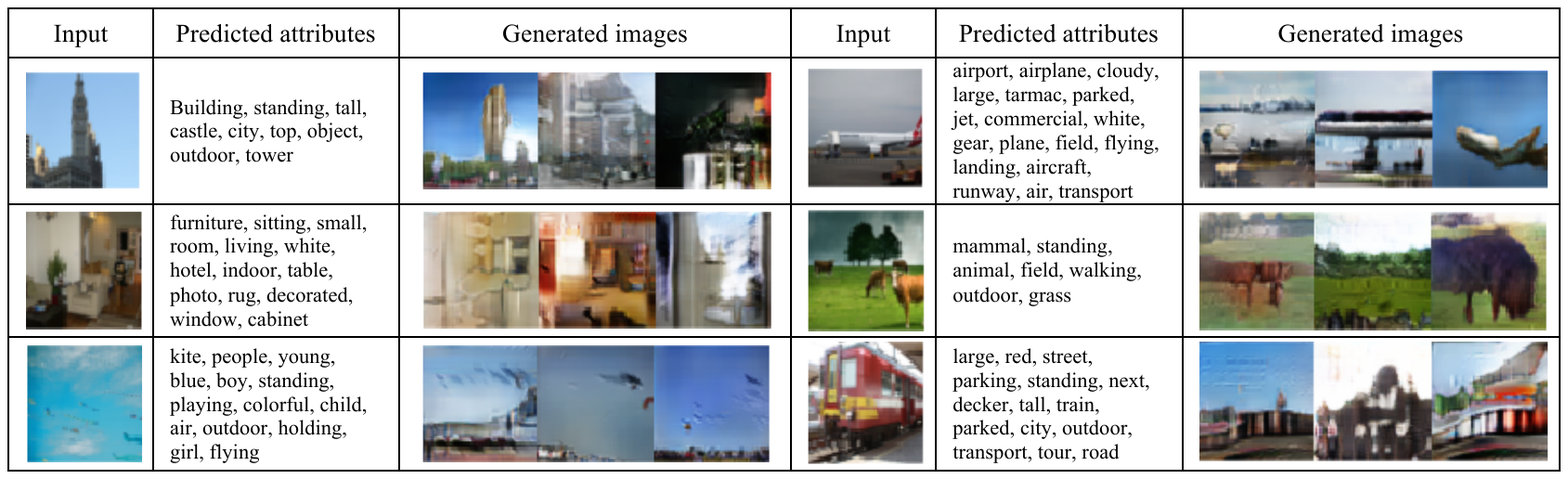}
	\caption{\small{Additional results on the image-to-attribute-to-image experiment.}}
	\label{fig:more_coco_result}
\end{figure}

\begin{figure}[t!]
	\centering
	\includegraphics[width=0.45\textwidth]{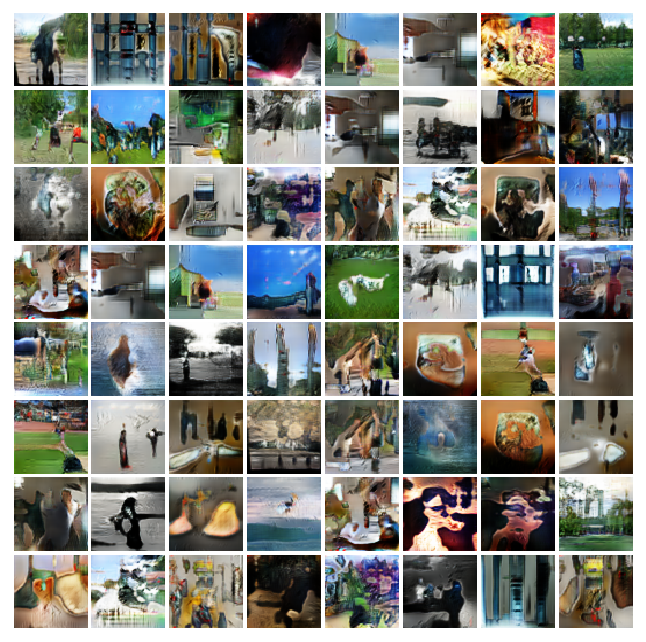}
	\caption{\small{Attribute-conditional image generation on the COCO dataset. Input attributes are omited for brevity. }}
	\label{fig:coco_generated}
\end{figure}


\section{Evaluation metrics for multi-label classification}
\paragraph{Precision@$k$}
Precision at $k$ is a popular evaluation metric for multi-label classification problems. Given the ground truth label vector $\yv\in\{0,1\}^L$ and the prediction $\hat{\yv}\in[0,1]^L$, $P@k$ is defined as
\begin{align*}
P@k:=\frac{1}{k}\sum_{l\in {\rm rank}_k(\hat{\yv})}y^{(l)} \,.
\end{align*}
Precision at $k$ performs evaluation that counts the fraction of correct predictions in the top $k$ scoring labels. 

\paragraph{nDCG@$k$}
normalized Discounted Cumulative Gain (nDCG) at rank $k$ is a family of ranking measures widely used in multi-label learning. DCG is the total gain accumulated at a particular rank $p$, which is defined as
\begin{align*}
DCG@k:=\sum_{l\in {\rm rank}_k(\hat{\yv})}\frac{y^{(l)}}{\log(l+1)} \,.
\end{align*}
Then normalizing DCG by the value at rank $k$ of the ideal ranking gives
\begin{align*}
N@k:=\frac{DCG@k}{\sum_{l=1}^{\min(k,\|\yv\|_0)}\frac{1}{\log(l+1)}} \,.
\end{align*}
%

\begin{table}[t!]
	\caption{\small{Results of P@5 and nDCG@5 for attribute predicting on CelebA and COCO.}}\label{Table:coco_1}
	\centering
	\small
	\begin{tabular}{lccc|ccc}
		\toprule
		\textbf{Dataset} & \multicolumn{3}{c|}{\textbf{CelebA}} & \multicolumn{3}{c}{\textbf{COCO}} \\
		\midrule
		\textbf{Method} & \textbf{1\%} & \textbf{10\%} & \textbf{100\%} & \textbf{10\%} &  \textbf{50\%} &   \textbf{100\%}  \\
		\midrule 
		Triple GAN & 49.35/52.73 & 73.68/74.55 & 80.89/78.58 & 38.98/41.00 & 41.08/43.50 & 43.18/46.00\\
		$\Delta$-GAN & 59.55/60.53  & 74.06/75.49 & 80.39/79.41  & 41.51/43.55 & 44.42/46.40 & 47.32/49.24  \\
		\bottomrule
	\end{tabular}
\end{table}

\begin{table}[t!]
	\caption{\small{Results of P@3 and nDCG@3 for attribute predicting on CelebA and COCO.}}\label{Table:coco_2}
	\centering
	\small
	\begin{tabular}{lccc|ccc}
		\toprule
		\textbf{Dataset} & \multicolumn{3}{c|}{\textbf{CelebA}} & \multicolumn{3}{c}{\textbf{COCO}} \\
		\midrule
		\textbf{Method} & \textbf{1\%} & \textbf{10\%} & \textbf{100\%} & \textbf{10\%} &  \textbf{50\%} &   \textbf{100\%}  \\
		\midrule 
		Triple GAN & 55.30/56.87 & 76.09/75.44 & 83.54/84.74 & 42.23/43.60 & 45.35/46.85 & 48.47/50.10\\
		$\Delta$-GAN & 62.62/62.72  & 76.04/76.27 & 84.81/86.85  & 45.45/46.56 & 48.19/49.29 & 50.92/52.02  \\
		\bottomrule
	\end{tabular}
\end{table}

\section{Detailed network architectures} \label{sec:network_architectures}
For the CIFAR10 dataset, we use the same network architecture as used in Triple GAN~\cite{li2017triple}. For the edges2shoes dataset, we use the same network architecture as used in the pix2pix paper~\cite{isola2016image}. For other datasets, we provide the detailed network architectures below.
\begin{table}[h!]
	\caption{\small Architecture of the models for $\Delta$-GAN on MNIST. BN denotes batch normalization.}
	\vskip 0.05in
	\hspace{-10mm}
	\small
	\begin{tabular}{c|c|c}
		\toprule
		Generator A to B  & Generator B to A  & Discriminator \\
		\midrule
		Input $28\times28$ Gray Image & Input $28\times28$ Gray Image & Input two $28\times28$ Gray Image\\
		
		\midrule
		$5\times5$ conv. 32 ReLU, stride 2, BN & $5\times5$ conv. 32 ReLU, stride 2, BN & $5\times5$ conv. 32 ReLU, stride 2, BN\\
		$5\times5$ conv. 64 ReLU, stride 2, BN & $5\times5$ conv. 64 ReLU, stride 2, BN & $5\times5$ conv. 64 ReLU, stride 2, BN\\
		$5\times5$ conv. 128 ReLU, stride 2, BN & $5\times5$ conv. 128 ReLU, stride 2, BN & $5\times5$ conv. 128 ReLU, stride 2, BN\\
		Dropout: 0.1 & Dropout: 0.1 &  Dropout: 0.1 \\
		MLP output $28\times28$, sigmoid & MLP output $28\times28$, sigmoid & MLP output 1, sigmoid \\
		
		\bottomrule
	\end{tabular} 
	\label{Table:TriGAN_model_mnist}
\end{table}

\begin{table}[h!]
	\caption{\small Architecture of the models for $\Delta$-GAN on CelebA. BN denotes batch normalization. lReLU denotes Leaky ReLU.}
	\vskip 0.05in
	\hspace{-10mm}
	\small
	\begin{tabular}{c|c|c}
		\toprule
		Generator A to B  & Generator B to A  & Discriminator \\
		\midrule
		Input $64\times64\times3$ Image & Input $1\times40$ attributes, $1\times100$ noise & Input $64\times64$ Image and $1\times40$ attributes\\
		
		\midrule
		$4\times4$ conv. 32 lReLU, stride 2, BN & concat input & concat two inputs\\
		$4\times4$  conv. 64 lReLU, stride 2, BN & MLP output 1024, lReLU, BN & $5\times5$ conv. 64 ReLU, stride 2, BN\\
		$4\times4$ conv. 128 lReLU, stride 2, BN & MP output 8192, lReLU, BN & $5\times5$ conv. 128 ReLU, stride 2, BN\\
		$4\times4$ conv. 256 lReLU, stride 2, BN & concat attributes & \\
		$4\times4$ conv. 512 lReLU, stride 2, BN & $5\times5$ deconv. 256 ReLU, stride 2, BN  & $5\times5$ conv. 256 ReLU, stride 2, B\\
		MLP output 512, lReLU  & $5\times5$ deconv. 128 ReLU, stride 2, BN  & $5\times5$ conv. 512 ReLU, stride 2, BN\\
		MLP output 40, sigmoid & $5\times5$ deconv. 64 ReLU, stride 2, BN  & \\
		& $5\times5$ deconv. 3 tanh, stride 2, BN  & MLP output 1, sigmoid \\
		
		\bottomrule
	\end{tabular} 
	\label{Table:TriGAN model_celebA}
\end{table}

\begin{table}[h!]
	\caption{\small Architecture of the models for $\Delta$-GAN on COCO. BN denotes batch normalization. lReLU denotes Leaky ReLU. $Dim$ denotes the number of attributes.}
	\vskip 0.05in
	\hspace{-10mm}
	\small
	\begin{tabular}{c|c|c}
		\toprule
		Generator A to B  & Generator B to A  & Discriminator \\
		\midrule
		Input $64\times64\times3$ Image & Input $1\times40$ attributes, $1\times100$ noise & Input $64\times64$ Image and $1\times Dim$ attributes\\
		
		\midrule
		$4\times4$ conv. 32 lReLU, stride 2, BN & concat inputs & concat conditional inputs\\
		$4\times4$  conv. 64 lReLU, stride 2, BN & MLP output 16384, BN & \\
		$4\times4$ conv. 128 lReLU, stride 2, BN & ResNet Block & $5\times5$ conv. 64 ReLU, stride 2, BN\\
		$4\times4$ conv. 256 lReLU, stride 2, BN & $4\times4$ deconv. 512, stride 2 & \\
		$4\times4$ conv. 512 lReLU, stride 2, BN & $3\times3$ conv. 512, stride 1, BN  & \\
		& ResNet Block                        & $5\times5$ conv. 128 ReLU, stride 2, BN\\
		ResNet Block                             & $4\times4$ deconv. 256, stride 2    & \\
		& $3\times3$ conv. 256, stride 1, BN  &  $5\times5$ conv. 256 ReLU, stride 2, BN \\
		& $4\times4$ deconv. 128 ReLU, stride 2 & \\
		$1\times1$ conv. 512 lReLU, stride 1, BN & $3\times3$ conv. 128 ReLU, stride 1, BN  & $5\times5$ conv. 512 ReLU, stride 2, BN\\
		& $4\times4$ deconv. $Dim$, stride 2    & \\
		$4\times4$ conv. $Dim$ sigmoid, stride 4   & $3\times3$ conv. $Dim$ tanh, stride 1 & MLP output 1, sigmoid\\				
		\bottomrule
	\end{tabular} 
	\label{Table:TriGAN model_coco}
\end{table}

\end{document}